\newif\ifCONF
\newif\ifarXiv
\newif\ifWP
\newif\ifnotCONF	% derivative conditional
\newif\ifnotarXiv	% derivative conditional
\newif\ifTR		% derivative conditionals (TR = arXiv or WP)
\newif\ifnotTR
\newtheorem{theorem}{Theorem}
\newtheorem{lemma}{Lemma}
\theoremstyle{definition}
\newtheorem*{remark}{Remark}
\icmltitlerunning{Plug-in martingales}
\title{Plug-in martingales for testing exchangeability on-line}
\author{Valentina Fedorova, Alex Gammerman,\\ Ilia Nouretdinov and Vladimir Vovk\\
  Computer Learning Research Centre\\
  Royal Holloway, University of London, UK\\
  \texttt{\{valentina,ilia,alex,vovk\}@cs.rhul.ac.uk}}
\title{Plug-in martingales for testing exchangeability on-line}
\author{Valentina Fedorova, Alex Gammerman,\\ Ilia Nouretdinov, and Vladimir Vovk\\[3mm]
  Computer Learning Research Centre\\
  Royal Holloway, University of London, UK\\[3mm]
  \texttt{\{valentina,ilia,alex,vovk\}@cs.rhul.ac.uk}}
\begin{document}
\ifCONF
\twocolumn[
\icmltitle{Plug-in martingales for testing exchangeability on-line}

% It is OKAY to include author information, even for blind
% submissions: the style file will automatically remove it for you
% unless you've provided the [accepted] option to the icml2012
% package.
\icmlauthor{Valentina Fedorova}{valentina@cs.rhul.ac.uk}
\icmlauthor{Alex Gammerman}{alex@cs.rhul.ac.uk}
\icmlauthor{Ilia Nouretdinov}{ilia@cs.rhul.ac.uk}
\icmlauthor{Vladimir Vovk}{v.vovk@rhul.ac.uk}
\icmladdress{Computer Learning Research Centre, Royal Holloway,
University of London, Egham, Surrey, TW20 0EX, UK}

% You may provide any keywords that you
% find helpful for describing your paper; these are used to populate
% the "keywords" metadata in the PDF but will not be shown in the document
\icmlkeywords{martingales, on-line testing of assumptions, conformal predictors}

\vskip 0.3in
]
\fi

\ifnotCONF
  \maketitle
\fi

\begin{abstract}
%six or seven sentences
A standard assumption in machine learning is the exchangeability of data,
which is equivalent to assuming that the examples are generated from the same
probability distribution independently. This paper is devoted to testing
the assumption of exchangeability on-line: the examples arrive one by one,
and after receiving each example we would like to have
a valid measure of the degree
to which the assumption of exchangeability has been falsified.
Such measures are provided by exchangeability martingales.
We extend known techniques for constructing exchangeability martingales
and show that our new method is competitive with the martingales introduced before.
Finally we investigate the performance of our
testing method on two benchmark datasets,
USPS and Statlog Satellite data;
for the former, the known techniques give satisfactory results,
but for the latter our new more flexible method becomes necessary. \end{abstract}

\section{Introduction}\label{sec:intro}

Many machine learning algorithms have been developed
to deal with real-life high dimensional data.
In order to state and prove properties of such algorithms it is
standard to assume that the data satisfy the exchangeability assumption
(although some algorithms make different assumptions or,
in the case of prediction with expert advice,
do not make any statistical assumptions at all).
These properties can be violated if the assumption is not satisfied,
which makes it important to test the data for satisfying it.

Note that the popular assumption that the data is i.i.d.\ (independent and identically distributed)
has the same meaning for testing as the exchangeability assumption.
A joint distribution of an infinite sequence of examples is exchangeable
if it is invariant w.r.\ to any permutation of examples.
Hence if the data is i.i.d., its distribution is exchangeable.
On the other hand, by de Finetti's theorem \citep[see, e.g.,][p.~28]{schervish:1995}
any exchangeable distribution on the data
(a potentially infinite sequence of examples)
is a mixture of distributions under which the data is i.i.d.
% Distributions satisfying the i.i.d.\ assumption can be treated as extreme points of exchangeable distributions.
Therefore, testing for exchangeability is equivalent to testing for being i.i.d.
% As well as if the i.i.d.\ assumption is tested for a distribution, it is possible
% to test the mixture of such distributions that is testing the exchangeability assumption.

Traditional statistical approaches to testing are inappropriate
for high dimensional data \citep[see, e.g.,][pp.~6--7]{vapnik:1998}.
To address this challenge a previous study \citep{vovk/nouretdinov/gammerman:2003}
suggested a way of on-line testing by employing the theory of conformal prediction
and calculating exchangeability martingales.
Basically testing proceeds in two steps.
The first step is implemented by a conformal predictor that outputs a sequence of p-values.
The sequence is generated in the on-line mode:
examples are presented one by one and for each new example a p-value is calculated
from this and all the previous examples.
For the second step the authors introduced exchangeability martingales that are functions of the p-values
and track the deviation from the assumption.
Once the martingale grows up to a large value
(20 and 100 are convenient rules of thumb)
the exchangeability assumption can be rejected for the data.

This paper proposes a new way of constructing martingales in the second step of testing.
To construct an exchangeability martingale based on the sequence of p-values we need a betting function,
which determines the contribution of a p-value to the value of the martingale.
In contrast to the previous studies that use a fixed betting function
the new martingale tunes its betting function to the sequence
to detect any deviation from the assumption.
We show that this martingale,
which we call a plug-in martingale,
is competitive with all the martingales covered by the previous studies;
namely, asymptotically the former grows faster than the latter.

\subsection{Related work}

The first procedure of testing exchangeability on-line is described
in \citet{vovk/nouretdinov/gammerman:2003}.
The core testing mechanism is an exchangeability martingale.
Exchangeability martingales are constructed using a sequence of p-values.
The algorithm for generating p-values assigns small p-values to unusual examples.
It implies the idea of designing martingales that would have a large value if too many small p-values were generated,
and suggests corresponding power martingales.
Other martingales (simple mixture and sleepy jumper) implement more
complicated strategies, but follow the same idea of scoring on small p-values.

\citet{ho:2005} applies power martingales to the problem of change detection
in time-varying data streams.
The author shows that small p-values inflate the martingale values
and suggests to use the martingale difference as another test for the problem.

\subsection{This paper}

To the best of our knowledge, no study has aimed to find any other ways of translating p-values
into a martingale value.
In this paper we propose a new more flexible method of constructing exchangeability martingales
for a given sequence of p-values.

The rest of the paper is organised as follows.
Section \ref{sec:definition} gives the definition of exchangeability martingales.
Section \ref{sec:construction} presents the construction of plug-in exchangeability martingales,
explains the rationale behind them, and compares them to the power martingales,
which have been used previously.
Section \ref{sec:results} shows experimental results of testing two real-life datasets for exchangeability;
for one of these datasets power martingales work satisfactorily and for the other one the greater flexibility
of plug-in martingales becomes essential.
Section \ref{sec:conclusion} summarises the paper.

\section{Exchangeability martingales}
\label{sec:definition}

This section outlines necessary definitions and results of the previous studies.

\subsection{Exchangeability}

Consider a sequence of random variables $\bigl(Z_1,Z_2,\ldots )$ that all take values
in the same example space.
Then the joint probability distribution $\textbf{P}(Z_1,\ldots,Z_N)$
of a finite number of the random variables is \emph{exchangeable}
if it is invariant under any permutation of the random variables.
The joint distribution of infinite number of random variables $\bigl(Z_1,Z_2,\ldots )$
is \emph{exchangeable} if the marginal distribution $\textbf{P}(Z_1,\ldots,Z_N)$
is exchangeable for every $N$.

\subsection{Martingales for testing}

As in \citet{vovk/nouretdinov/gammerman:2003},
the main tool for testing exchangeability on-line is a martingale.
The value of the martingale reflects the strength of evidence against
the exchangeability assumption. An \emph{exchangeability martingale}
is a sequence of non-negative random variables $S_0,S_1,\ldots$
that keep the conditional expectation:
\begin{align*}
  S_n &\ge 0\\
  S_{n} &= \textbf{E}(S_{n+1}\mid S_1,\ldots,S_{n}),
\end{align*}
where \textbf{E} refers to the expected value
with respect to any exchangeable distribution on examples.
We also assume $S_0 = 1$.
Note that we will obtain an equivalent definition
if we replace ``any exchangeable distribution on examples''
by ``any distribution under which the examples are i.i.d.''\
(remember the discussion of de Finetti's theorem in Section~\ref{sec:intro}).

To understand the idea behind martingale testing we can imagine a game where a player
starts from the capital of 1, places bets on the outcomes of a sequence of events,
and never risks bankruptcy. Then a martingale corresponds to a strategy of the player,
and its value reflects the acquired capital.
According to Ville's inequality \citep[see][p.~100]{ville:1939},
$$\textbf{P}\Bigl\{\exists n: S_n\geq C\Bigr\}\leq 1/C, \quad \forall C\ge1,$$
it is unlikely for any $S_n$ to have a large value.
For the problem of testing exchangeability, if the final value of a martingale is large
then the exchangeability assumption for the data can be rejected with the corresponding probability.

\subsection{On-line calculation of p-values}

Let $(z_1, z_2, \ldots)$ denote a sequence of examples. Each example $z_i$ is the vector representing
a set of attributes $x_i$ and a label $y_i$: $z_i = (x_i,y_i)$.
In this paper we use conformal predictors to generate a sequence of p-values
that corresponds to the given examples. The general idea of conformal prediction is
to test how well a new example fits to the previously observed examples.
For this purpose a ``nonconformity measure'' is defined.
This is a function that estimates the strangeness of one example
with respect to others:
$$
  \alpha_i = A\Bigl(z_i,\{z_1,\ldots,z_n\}\Bigr),
$$
where in general $\{\ldots\}$ stands for a multiset
(the same element may be repeated more than once) rather than a set.
Typically,
each example is assigned a ``nonconformity score'' $\alpha_i$
based on some prediction method.
In this paper we deal with the classification problem and the 1-Nearest Neighbor (1-NN) algorithm is used
as the underling method to compute the nonconformity scores.
The algorithm is simple but it works well enough in many cases
\citep[see, e.g.,][pp.~422--427]{hastie:2001}.
A natural way to define the nonconformity score of an example
is by comparing its distance to the examples with the same label
to its distance to the examples with a different label:
\begin{equation}\label{eq:NN}
  \alpha_i = \frac{\min_{j \neq i: y_i = y_j}d(x_i,x_j)}{\min_{j \neq i: y_i \neq y_j}d(x_i,x_j)},
\end{equation}
where $d(x_i,x_j)$ is the Euclidean distance.
% For more complicated underlying algorithms with tuned parameters it is possible to get
% low values of the ``non-conformity'' measure because of overfitting,
% 1-NN with the Euclidean distance does not have any parameters so it is reasonable to use it
% for calculation of the nonconformity scores.
According to the chosen nonconformity measure,
$\alpha_i$ is high if the example is close to
another example with a different label and far from any examples with the same label.

Using the calculated nonconformity scores of all observed examples,
the p-value $p_n$ that corresponds to an example $z_n$ is calculated as
$$p_n = \frac{\#\{i: \alpha_i > \alpha_n\} + \theta_n \#\{i: \alpha_i = \alpha_n \}}{n},$$
where $\theta_n$ is a random number from $[0,1]$
and the symbol $\#$ means the cardinality of a set.
Algorithm~\ref{alg:generate_pVal} summarises the process of on-line calculation of p-values
(it is clear that it can also be applied to a finite dataset $(z_1,\ldots,z_n)$
producing a finite sequence $(p_1,\ldots,p_n)$ of p-values).

\begin{algorithm}[tb]
\caption{Generating p-values on-line}
\label{alg:generate_pVal}
\begin{algorithmic}
   \STATE {\bfseries Input:} $(z_1, z_2,\ldots)$ data for testing
   \STATE {\bfseries Output:} $(p_1, p_2,\ldots)$ sequence of p-values

   \FOR{$i=1,2,\ldots$}
   \STATE observe a new example $z_i$
   % \STATE $\Bigl(z_1,\ldots,z_{i-1}\Bigr)$ previous examples
   \FOR{$j=1$ {\bfseries to} $i$}
   \STATE $\alpha_j = A\Bigl(z_j,\{z_1,\ldots,z_i\}\Bigr)$
   \ENDFOR
   \STATE  $p_i = \frac{\#\{j: \alpha_j > \alpha_i \} + \theta_i \#\{j: \alpha_j = \alpha_i \}}{i}$
   \ENDFOR
\end{algorithmic}
\end{algorithm}

The following is a standard result in the theory of conformal prediction
(see, e.g., \citealt{vovk/nouretdinov/gammerman:2003}, Theorem~1).
\begin{theorem} % \label{thm:theorem1}
If examples $(z_1, z_2,\ldots)$
(resp.\ $(z_1, z_2,\ldots,$ $z_n)$)
% Valya: I broke the formula to fit it to the ICML column width
% Volodya: this trick did not work in my printout; edited
satisfy the exchangeability assumption,
Algorithm \ref{alg:generate_pVal} produces p-values
$(p_1, p_2,\ldots)$ (resp.\ $(p_1, p_2,\ldots, p_n)$)
that are independent and uniformly distributed in $[0,1]$.
\end{theorem}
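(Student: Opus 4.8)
The plan is to build a decreasing (reverse) filtration with respect to which, at each step $n$, the p-value $p_n$ is exactly $\mathrm{Uniform}[0,1]$ and independent of everything computed at later steps, and then to chain these one-step statements together by backward induction to obtain full independence of $(p_1,\dots,p_N)$. Throughout I take the randomisations $\theta_1,\theta_2,\dots$ to be i.i.d.\ uniform on $[0,1]$ and independent of the data $(z_1,z_2,\dots)$, as the algorithm intends. Concretely, for each $n\ge 0$ let $\mathcal{G}_n$ be the $\sigma$-algebra generated by the unordered bag (multiset) $\{z_1,\dots,z_n\}$ together with the later examples $z_{n+1},z_{n+2},\dots$ and the later randomisations $\theta_{n+1},\theta_{n+2},\dots$ (in the finite-horizon version all sequences are truncated at $N$). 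A one-line check gives $\mathcal{G}_0\supseteq\mathcal{G}_1\supseteq\mathcal{G}_2\supseteq\cdots$: from $\mathcal{G}_{n-1}$ one knows $z_n$ explicitly, hence the larger bag $\{z_1,\dots,z_n\}$, and one still knows $z_{n+1},z_{n+2},\dots$ and $\theta_{n+1},\theta_{n+2},\dots$. The same observation shows that $p_m$ is $\mathcal{G}_n$-measurable for every $m>n$, since the nonconformity scores $\alpha_1,\dots,\alpha_m$ entering $p_m$ depend only on the bag $\{z_1,\dots,z_m\}$, the distinguished example $z_m$ within it is known, and $\theta_m$ is known --- all from $\mathcal{G}_n$.

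The heart of the proof is the claim that, conditionally on $\mathcal{G}_n$, the p-value $p_n$ is $\mathrm{Uniform}[0,1]$. Here I would invoke the basic structural property of exchangeable sequences: conditionally on the bag $\{z_1,\dots,z_n\}$ (and, in the infinite case, also on the tail $z_{n+1},z_{n+2},\dots$, since permuting the first $n$ coordinates is a finite permutation of the whole sequence and so leaves its law unchanged), the ordered tuple $(z_1,\dots,z_n)$ is uniformly distributed over the $n!$ orderings of the bag, while $\theta_n$ stays independent uniform. As $(\alpha_1,\dots,\alpha_n)$ is a fixed function of the bag once the distinguished slot is chosen, this means $\alpha_n$ is a uniformly random pick among $\alpha_1,\dots,\alpha_n$. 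Grouping the $n$ slots into tie-classes of equal score and using the independent smoothing $\theta_n$, a routine computation shows $p_n=\bigl(\#\{i:\alpha_i>\alpha_n\}+\theta_n\#\{i:\alpha_i=\alpha_n\}\bigr)/n$ lands uniformly in the subinterval of $(0,1]$ of length $m_j/n$ attached to its tie-class (of size $m_j$); since these subintervals tile $(0,1]$ and each is hit with probability equal to its length, $p_n\mid\mathcal{G}_n$ is $\mathrm{Uniform}[0,1]$, and in particular independent of $\mathcal{G}_n$.

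Finally I would assemble the conclusion by downward induction. Fix $N$ and measurable sets $A_1,\dots,A_N\subseteq[0,1]$. Because $p_{n+1},\dots,p_N$ are $\mathcal{G}_n$-measurable and $p_n$ is $\mathrm{Uniform}[0,1]$ given $\mathcal{G}_n$, conditioning on $\mathcal{G}_n$ gives
\[
  \textbf{P}(p_n\in A_n,\dots,p_N\in A_N)
  =|A_n|\;\textbf{P}(p_{n+1}\in A_{n+1},\dots,p_N\in A_N),
\]
with $|\cdot|$ Lebesgue measure; iterating from $n=N$ down to $n=1$ yields $\textbf{P}(p_1\in A_1,\dots,p_N\in A_N)=\prod_{i=1}^{N}|A_i|$, i.e.\ $(p_1,\dots,p_N)$ are independent and uniform on $[0,1]$, and letting $N\to\infty$ (Kolmogorov consistency) covers the infinite sequence. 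I expect the main obstacle to be the rigorous justification of the ``uniform over orderings given the bag'' step in a general example space, where the data may be non-atomic and ties are allowed --- i.e.\ constructing the appropriate regular conditional distribution --- whereas the tie-class bookkeeping and the induction itself are elementary.
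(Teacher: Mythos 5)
The paper itself gives no proof of this theorem---it is stated as a standard result and attributed to \citet{vovk/nouretdinov/gammerman:2003}, Theorem~1. Your argument is correct and is essentially the standard proof from that literature: the reverse filtration generated by the bag $\{z_1,\dots,z_n\}$ together with the later examples and randomisations, the observation that $p_{n+1},p_{n+2},\ldots$ are measurable with respect to it while $p_n$ is conditionally $\mathrm{Uniform}[0,1]$ (via uniformity over orderings of the bag plus the tie-breaking by $\theta_n$), and backward induction to get joint independence.
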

The property that the examples generated by an exchangeable distribution
provide uniformly and independently
distributed p-values allows us to test exchangeability
by calculating martingales as functions of the p-values.

\section{Martingales based on p-values} \label{sec:construction}

This section focuses on the second part of testing:
given the sequence of p-values a martingale is
calculated as a function of the p-values.

For each $i\in\{1,2,\ldots\}$, let $f_i:[0,1]^i\to[0,\infty)$.
Let $(p_1,p_2,\ldots)$ be the sequence of p-values generated by Algorithm \ref{alg:generate_pVal}.
We consider martingales $S_n$ of the form
\begin{equation}\label{eq:mart}
  S_n = \prod_{i=1}^n f_i(p_i),
  \quad
  n=1,2,\ldots,
\end{equation}
where we denote $f_i(p) = f_i(p_1,\ldots,p_{i-1}, p)$
and call the function $f_i(p)$ a \emph{betting function}.

To be sure that (\ref{eq:mart}) is indeed a martingale
we need the following constraint on the betting functions $f_i$:
$$\displaystyle \int_0^1 f_i(p)d p = 1, \quad i = 1,2,\ldots$$
Then we can check:
\begin{multline*}
  \textbf{E}(S_{n+1}\mid S_0,\ldots,S_n)
  =
  \int_0^1 \prod_{i=1}^n \Bigl(f_i(p_i) \Bigr)f_{n+1}(p) d p\\
  =
  \prod_{i=1}^n \Bigl(f_i(p_i)\Bigr) \int_0^1 f_{n+1}(p) dp
  =
  \prod_{i=1}^n f_i(p_i)
  =
  S_n.
\end{multline*}

Using representation (\ref{eq:mart}) we can update the  martingale on-line:
having calculated a p-value $p_i$ for a new example in Algorithm \ref{alg:generate_pVal}
the current martingale value becomes $S_i = S_{i-1}\cdot f_i(p_i)$.
To define the martingales completely we need to describe the betting functions $f_i$.

\subsection{Previous results: power and simple mixture martingales}

Previous studies \citep{vovk/nouretdinov/gammerman:2003} have proposed to use a fixed betting function
$$\forall i:~f_i(p) = \varepsilon p^{\varepsilon-1}, $$
where $\varepsilon \in [0,1]$.
Several martingales were constructed using the function.
The \emph{power martingale} for some $\varepsilon$, denoted as ${M}^\varepsilon_n$,
is defined as
$$M^\varepsilon_n =\prod_{i=1}^n \varepsilon p_i^{\varepsilon-1}.$$
The \emph{simple mixture} martingale, denoted as ${M}_n$, is the mixture of power martingales
over different $\varepsilon \in [0,1]$:
$${M}_n = \int_0^1 {M}^\varepsilon_n d\varepsilon.$$

\begin{figure}[tb]%[ht]
\vskip 0.2in
\begin{center}
\centerline{\includegraphics[width=0.7\columnwidth]{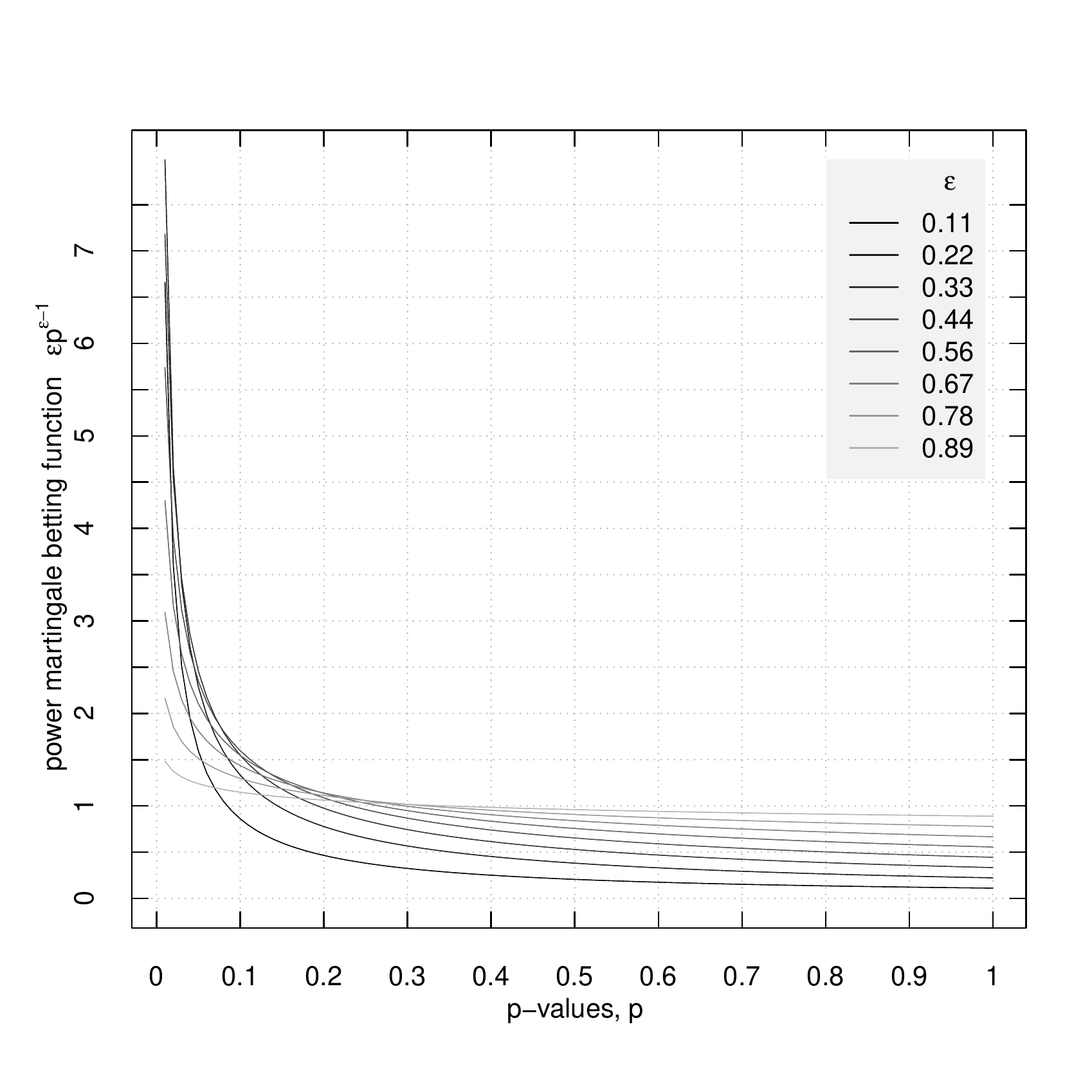}}
\caption{The betting functions that are used to construct the power and simple mixture martingales.}
\label{fig:power_betting_f}
\end{center}
\vskip -0.2in
\end{figure}

Such a martingale will grow only if there are many small p-values in the sequence.
This follows from the shape of the betting functions: see Figure \ref{fig:power_betting_f}.
If the generated p-values concentrate in any other part of the unit interval,
we cannot expect the martingale to grow.
So it might be difficult to reject the assumption of exchangeability for such sequences.

\subsection{New plug-in approach}

\subsubsection{Plug-in martingale}
Let us use an estimated probability density function
as the betting function $f_i(p)$.
At each step the probability density function is estimated
using the accumulated p-values:
\begin{equation}\label{eq:rho}
  \rho_i(p) = \widehat{\rho}(p_1,\ldots, p_{i-1},p),
\end{equation}
where $\widehat{\rho}(p_1,\ldots, p_{i-1},p)$
is the estimate of the probability density function
using the p-values $p_1,\ldots, p_{i-1}$ output by Algorithm \ref{alg:generate_pVal}.

Substituting these betting functions into (\ref{eq:mart}) we get a new martingale
that we call a \emph{plug-in} martingale.
The martingale avoids betting if the p-values are distributed uniformly,
but if there is any peak it will be used for betting.

\paragraph{Estimating a probability density function.}
 In our experiments we have used the statistical environment and language R.
 The \texttt{density} function in its \texttt{Stats} package
 implements kernel density estimation with different parameters.
 % for kernel, bandwidth and other
 But since p-values always lie in the unit interval,
 the standard methods of kernel density estimation lead to poor results for the points that are near the boundary.
 To get better results for the boundary points
 the sequence of p-values is reflected to the left from zero and to the right from one.
 Then the kernel density estimate is calculated using the extended sample
 $\cup_{i=1}^n\bigl\{-p_i, p_i, 2 - p_i\bigr\}$.
 % $i = 1,\ldots,n$
 The estimated density function is set to zero outside the unit interval and then
 % it is multiplied by three to make it
 normalised to integrate to one.
 For the results presented in this paper the parameters used are
 the Gaussian kernel and Silverman's ``rule of thumb'' for bandwidth selection.
 Other settings have been tried as well, but the results are comparable and lead
 to the same conclusions.

The values $S_n$ of the plug-in martingale can be updated recursively.
Suppose computing the nonconformity scores $(\alpha_1,\ldots,\alpha_n)$
from $(z_1,\ldots,z_n)$ takes time $g(n)$
and evaluating (\ref{eq:rho}) takes time $h(n)$.
Then updating $S_{n-1}$ to $S_n$ takes time
$O(g(n)+n+h(n))$:
indeed, it is easy to see that calculating the rank of $\alpha_n$ in the multiset $\{\alpha_1,\ldots,\alpha_n\}$
takes time $\Theta(n)$.

The performance of the plug-in martingale on real-life datasets will be presented
in Section~\ref{sec:results}.
The rest of the current section proves that the plug-in martingale
provides asymptotically a better growth rate than any martingale with a fixed betting
function. To prove this asymptotical property of the plug-in martingale we need
the following assumptions.

\subsubsection{Assumptions}

Consider an infinite sequence of p-values $(p_1, p_2, \ldots)$.
(This is simply a deterministic sequence.)
% Further in the section we define new terms for the given sequence.
For its finite prefix $(p_1,\ldots, p_n)$ define the corresponding empirical
probability measure $\textbf{P}_n$:
for a Borel set $A$ in \textbf{R},
$$\textbf{P}_n(A) = \frac{\#\{i=1,\ldots,n: p_i \in A\}}{n}.$$
We say that the sequence $(p_1, p_2, \ldots)$ is \emph{stable}
if there exists a probability measure \textbf{P} on $\textbf{R}$ such that:
\begin{enumerate}
%\begin{compactenum}% from the paralist package
  \item $\textbf{P}_n\xrightarrow[n\rightarrow\infty]{\mbox{weak}} \textbf{P}$;
  \item there exists a positive continuous density function $\rho(p)$ for $\textbf{P}$:
   for any Borel set $A$ in \textbf{R},
   $\textbf{P}(A) = \int_A \rho(p)d p$.
%\end{compactenum}
\end{enumerate}
Intuitively, the stability means that asymptotically the sequence of p-values
can be described well by a probability distribution.

Consider a sequence $(f_1(p), f_2(p),\ldots)$ of betting functions.
(This is simply a deterministic sequence of functions $f_i:[0,1]\to[0,\infty)$,
although we are particularly interested in the functions $f_i(p)=\rho_i(p)$, as defined in (\ref{eq:rho}).)
We say that this sequence is \emph{consistent for $(p_1,p_2,\ldots)$} if
$$\log\bigl(f_n(p)\bigr) \xrightarrow[n\rightarrow\infty]{\mbox{uniformly in $p$}} \log\bigl(\rho(p)).$$
Intuitively, consistency is an assumption about the algorithm that we use to estimate the function $\rho(p)$;
in the limit we want a good approximation.
% every next betting function should estimate $\rho(p)$ better and better.

\subsubsection{Growth rate of plug-in martingale}

The following result says
% \ref{thm:plug-in_result}
that, under our assumptions, the logarithmic growth rate of the plug-in martingale
is better than that of any martingale with a fixed betting function
(remember that by a betting function we mean any function mapping $[0,1]$ to $[0,\infty)$).
\begin{theorem}\label{thm:plug-in_result}
If a sequence $(p_1,p_2,\ldots)\in[0,1]^{\infty}$ is stable and
a sequence of betting functions $\bigl(f_1(p), f_2(p),\ldots\bigr)$ is consistent for it
then, for any positive continuous betting function $f$,
$$
  \liminf_{n\rightarrow\infty}\left(\frac{1}{n} \sum_{i=1}^n\log\bigl(f_i(p_i)\bigr) -
  \frac{1}{n} \sum_{i=1}^n\log\bigl(f(p_i)\bigr)\right) \geq 0
$$
\end{theorem}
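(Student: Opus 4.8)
The plan is to prove slightly more than the stated inequality: that each of the two Ces\`aro averages in the parentheses converges as $n\to\infty$, and that the limit of their difference is a relative entropy, hence nonnegative. The key elementary fact is that, since all the $p_i$ lie in the closed interval $[0,1]$, the empirical measures $\textbf{P}_n$ and (by the portmanteau theorem) their weak limit $\textbf{P}$ are all concentrated on $[0,1]$; in particular $\int_0^1\rho(p)\,dp=\textbf{P}([0,1])=1$, so $\rho$ is a genuine probability density on $[0,1]$, and for every $g$ continuous on $[0,1]$ (automatically bounded there) weak convergence gives $\frac1n\sum_{i=1}^n g(p_i)=\int g\,d\textbf{P}_n\to\int_0^1 g(p)\,\rho(p)\,dp$. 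Applying this with $g=\log f$, which is continuous and bounded on $[0,1]$ because $f$ is positive and continuous, already settles the second average: $\frac1n\sum_{i=1}^n\log\bigl(f(p_i)\bigr)\to\int_0^1\log\bigl(f(p)\bigr)\,\rho(p)\,dp$.

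For the first average I would split $\log\bigl(f_i(p_i)\bigr)=\bigl(\log f_i(p_i)-\log\rho(p_i)\bigr)+\log\rho(p_i)$. Put $\varepsilon_i:=\sup_{p\in[0,1]}\bigl|\log f_i(p)-\log\rho(p)\bigr|$; the consistency assumption says exactly that $\varepsilon_i\to0$, so $\bigl|\frac1n\sum_{i=1}^n(\log f_i(p_i)-\log\rho(p_i))\bigr|\le\frac1n\sum_{i=1}^n\varepsilon_i\to0$ by Ces\`aro averaging — and, crucially, this requires no control whatsoever on where the $p_i$ fall, only the uniformity of the convergence. Since $\rho$ is positive and continuous, $\log\rho$ is continuous and bounded on $[0,1]$, so the weak-convergence fact above gives $\frac1n\sum_{i=1}^n\log\rho(p_i)\to\int_0^1\log\bigl(\rho(p)\bigr)\,\rho(p)\,dp$. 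Adding the two pieces, $\frac1n\sum_{i=1}^n\log\bigl(f_i(p_i)\bigr)\to\int_0^1\log\bigl(\rho(p)\bigr)\,\rho(p)\,dp$.

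Subtracting the two limits, the quantity inside the $\liminf$ converges to $\int_0^1\rho(p)\log\bigl(\rho(p)/f(p)\bigr)\,dp$. Since $f$ is a betting function we have $\int_0^1 f(p)\,dp=1$, and we have already noted $\int_0^1\rho(p)\,dp=1$, so this integral is the Kullback--Leibler divergence of $\rho$ from $f$; it is nonnegative by Jensen's inequality applied to the convex function $t\mapsto-\log t$ (Gibbs' inequality). Hence the $\liminf$ is in fact an honest limit and equals this divergence, which is $\ge0$; this is exactly the assertion of the theorem.

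The routine ingredients (the weak-convergence limits, Ces\`aro averaging, nonnegativity of relative entropy) should go through with little trouble; the steps that need genuine care are: (i) checking that $\textbf{P}$ is concentrated on $[0,1]$, which is what lets mere continuity of $\log f$ and $\log\rho$ on the closed interval suffice for passing to the limit and makes $\rho$ a true probability density there; (ii) exploiting that consistency is \emph{uniform} in $p$ — this is precisely what allows the Ces\`aro bound for the terms $\log f_i(p_i)-\log\rho(p_i)$ to work for an arbitrary deterministic sequence $(p_i)$, where we have no grip on the individual $p_i$; and (iii) recording explicitly that the competitor $f$ integrates to $1$, since without this the limiting integral need not be a KL divergence and the inequality can genuinely fail (take, say, $f$ a constant greater than $1$). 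Finiteness of $\int_0^1\rho\log(\rho/f)$ is not an issue, as $\log\rho$ and $\log f$ are bounded on $[0,1]$.
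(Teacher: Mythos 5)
Your proposal is correct and uses essentially the same ingredients as the paper's proof: weak convergence of $\textbf{P}_n$ to pass from empirical averages to integrals against $\rho$, uniform consistency to replace $\log f_i$ by $\log\rho$ (your Ces\`aro bound on $\frac1n\sum_i\varepsilon_i$ plays the same role as the paper's $MN_3/n$ term), and nonnegativity of the Kullback--Leibler divergence as the final inequality. The only difference is presentational: you argue directly and obtain the slightly stronger conclusion that the $\liminf$ is an actual limit equal to the divergence, whereas the paper runs the same estimates inside a proof by contradiction with an explicit $\varepsilon=\delta/4$ bookkeeping.
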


First we explain the meaning of Theorem~\ref{thm:plug-in_result} and then prove it.
According to representation (\ref{eq:mart}) after $n$ steps the martingale grows to
\begin{equation}
\label{eq:mart_prod}
\prod_{i=1}^n f_i(p_i).
\end{equation}
Note that if for any p-value $p\in [0,1] $ we have $f_i(p)=0$
then the martingale can become zero and will never change after that.
Therefore, it is reasonable to consider positive $f_i(p)$.
Then we can rewrite product (\ref{eq:mart_prod}) as sum of logarithms,
which gives us the logarithmic growth of the martingale:
$$\sum_{i=1}^n \log \Bigl( f_i(p_i) \Bigr).$$
We assume that the sequence of p-values is stable and the
sequence of estimated probability density functions that is used to construct the plug-in
martingale is consistent.
Then the limit inequality from Theorem~\ref{thm:plug-in_result} states
that the logarithmic growth rate of the plug-in martingale is asymptotically at least as high as that
of any martingale with a fixed betting function (which have been suggested in previous studies).

To prove Theorem~\ref{thm:plug-in_result} we will use the following lemma.
%~\ref{thm:lem1}.
\begin{lemma}\label{thm:lem1}
  For any probability density functions $\rho$ and $f$
  (so that $\int_0^1 \rho(p) dp =1$ and $\int_0^1 f(p)d p = 1$),
  $$
    \int_0^1 \log\Bigl(\rho(p)\Bigr)\rho(p) dp \geq \int_0^1 \log\Bigl(f(p)\Bigr)\rho(p) dp.
  $$
\end{lemma}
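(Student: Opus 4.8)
The plan is to recognize this inequality as the non-negativity of the Kullback--Leibler divergence between the densities $\rho$ and $f$, and to derive it from the concavity of the logarithm via Jensen's inequality. Rearranging the claimed inequality, it is equivalent to
\[
  \int_0^1 \log\!\left(\frac{f(p)}{\rho(p)}\right)\rho(p)\,dp \leq 0.
\]
First I would note that the integrand is only a problem where $\rho(p)=0$ or $f(p)=0$; for the application in Theorem~\ref{thm:plug-in_result} the relevant density $\rho$ is positive and continuous, so I can either restrict attention to that case or use the standard convention $0\log 0 = 0$ and handle the set $\{\rho = 0\}$ (which contributes nothing to the integral) separately.

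The main step is to apply Jensen's inequality. Since $\log$ is concave and $\rho(p)\,dp$ is a probability measure on $[0,1]$, we have
\[
  \int_0^1 \log\!\left(\frac{f(p)}{\rho(p)}\right)\rho(p)\,dp
  \;\leq\;
  \log\!\left(\int_0^1 \frac{f(p)}{\rho(p)}\,\rho(p)\,dp\right)
  \;=\;
  \log\!\left(\int_0^1 f(p)\,dp\right)
  \;=\;
  \log 1 \;=\; 0,
\]
where the last equalities use that the $\rho(p)$ factors cancel and that $f$ integrates to $1$. Subtracting $\int_0^1 \log(f(p))\rho(p)\,dp$ from both sides of the original and moving terms around recovers exactly the statement of the lemma. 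An alternative one-line route, if one prefers to avoid invoking Jensen directly, is the elementary bound $\log x \leq x - 1$ applied pointwise to $x = f(p)/\rho(p)$, which after multiplying by $\rho(p)$ and integrating gives $\int \log(f/\rho)\,\rho \leq \int (f - \rho) = 1 - 1 = 0$.

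I do not expect a serious obstacle here; the only care needed is bookkeeping about where the densities vanish and the justification that the relevant integrals are well-defined (possibly equal to $-\infty$ on the left-hand side of the rearranged form, in which case the inequality is trivial). Since the lemma will be used with $\rho$ positive and continuous, these degeneracies do not actually arise in the application, so a clean proof can simply assume $\rho > 0$ and note that the general case follows by the usual conventions.
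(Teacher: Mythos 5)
Your proposal is correct and takes essentially the same approach as the paper: the paper's proof simply identifies the inequality as the non-negativity of the Kullback--Leibler divergence and cites Kullback (1959) for that fact, whereas you additionally supply the standard derivation of it via Jensen's inequality (or the bound $\log x \leq x-1$). Your handling of the degenerate cases where the densities vanish is careful and does not conflict with anything in the paper.
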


\begin{proof}[Proof of Lemma~\ref{thm:lem1}]
It is well known \citep[][p.~14]{kullback:1959} that the Kullback--Leibler divergence
is always non-negative:
$$
  \int_0^1\log\Bigl(\frac{\rho(p)}{f(p)}\Bigr)\rho(p) dp \geq 0.
$$
This is equivalent to the inequality asserted by Lemma~\ref{thm:lem1}.
% We rewrite the inequality of Lemma~\ref{thm:lem1} through the sequence of equivalent transformations:
% \begin{multline*}
%   \int_0^1 \log\Bigl(\rho(p)\Bigr)\rho(p) dp \geq
%   \int_0^1 \log\Bigl(f(p)\Bigr)\rho(p)dp\\
%   \Leftrightarrow
%   \int_0^1 \log\Bigl(\rho(p)\Bigr)\rho(p) dp -
%   \int_0^1 \log\Bigl(f(p)\Bigr)\rho(p) dp \geq 0\\
%   \Leftrightarrow
%   \int_0^1\log\Bigl(\frac{\rho(p)}{f(p)}\Bigr)\rho(p) dp \geq 0
% \end{multline*}
%% If $\int_0^1 \rho(p) dp =1$ and $\int_0^1 f(p)d p = 1$
%
% This proves Lemma~\ref{thm:lem1}.
\end{proof}

\begin{proof}[Proof of Theorem~\ref{thm:plug-in_result}]
Suppose that,
contrary to the statement of Theorem~\ref{thm:plug-in_result},
there exists $\delta >0$ such that
\begin{equation}\label{eq:proof_contr1}
  \liminf_{n\rightarrow\infty}
  \left(
    \frac{1}{n} \sum_{i=1}^n\log\bigl(f_i(p_i)\bigr) -
    \frac{1}{n} \sum_{i=1}^n\log\bigl(f(p_i)\bigr)
  \right) < -\delta.
\end{equation}
Then choose an $\varepsilon$ satisfying $0<\varepsilon < \delta/4$.

Substituting the definition of $\rho(p)$ into Lemma~\ref{thm:lem1} we obtain
\begin{equation}\label{eq:proof_step1}
\int_0^1 \log\Bigl(\rho(p)\Bigr)d\textbf{P} \geq \int_0^1 \log\Bigl(f(p)\Bigr)d\textbf{P}.
\end{equation}
From the stability of $(p_1, p_2, \ldots)$ it follows that
there exists a number $N_1 = N_1(\varepsilon)$ such that, for all $n > N_1$,
$$
  \left|
    \int_0^1 \log\Bigl(f(p)\Bigr)d\textbf{P}_n -
    \int_0^1 \log\Bigl(f(p)\Bigr)d\textbf{P}
  \right|
  <
  \varepsilon
$$
and
$$
  \left|
    \int_0^1 \log\Bigl(\rho(p)\Bigr)d\textbf{P}_n -
    \int_0^1 \log\Bigl(\rho(p)\Bigr)d\textbf{P}
  \right|
  <
  \varepsilon.
$$
Then inequality (\ref{eq:proof_step1}) implies that, for all $n \geq N_1$,
$$
\int_0^1 \log\Bigl(\rho(p)\Bigr)d\textbf{P}_n \geq
\int_0^1 \log\Bigl(f(p)\Bigr)d\textbf{P}_n - 2\varepsilon.
$$
By the definition of the probability measure $\textbf{P}_n$,
the last inequality is the same thing as
\begin{equation}\label{eq:proof_step3}
  \frac{1}{n}\sum_{i=1}^n \log\Bigl(\rho(p_i)\Bigr) \geq
  \frac{1}{n}\sum_{i=1}^n \log\Bigl(f(p_i)\Bigr) - 2\varepsilon.
\end{equation}
By the consistency of $\bigl(f_1(p), f_2(p),\ldots\bigr)$
there exists a number $N_2= N_2(\varepsilon)$ such that,
for all $i > N_2$ and all $p\in[0,1]$,
\begin{equation}
\label{eq:proof_step4_c1}
\Bigl| \log \bigl( f_i(p) \bigr) - \log \bigl( \rho(p) \bigr) \Bigr| <
\varepsilon.
\end{equation}
Let us define the number
\begin{equation}
\label{eq:proof_step4_c2}
M = \displaystyle\max_{i,p}
\bigl| \log \bigl( f_i(p) \bigr) - \log \bigl( \rho(p) \bigr)\bigr|.
\end{equation}
From (\ref{eq:proof_step4_c1}) and (\ref{eq:proof_step4_c2}) we have
\begin{equation}
\label{eq:proof_step4_c3}
\bigl| \log \bigl( f_i(p) \bigr) - \log \bigl( \rho(p) \bigr)\bigr| \leq
\left\{
 \begin{array}{ll}
   M,& i \leq N_2\\
   \varepsilon,& i > N_2.
 \end{array}
\right.
\end{equation}

Denote $N_3 = \max(N_1,N_2)$.
Then, using (\ref{eq:proof_step4_c3}) and (\ref{eq:proof_step3}),
we obtain, for all $n > N_3$,
$$
 \frac{1}{n}\sum_{i=1}^n \log\Bigl(f_i(p_i)\Bigr) \geq
\frac{1}{n}\sum_{i=1}^n \log\Bigl(f(p_i)\Bigr) - 3\varepsilon - \frac{M N_3}{n}.
$$
Denoting $N_4 = \max(N_3, \frac{M N_3}{\varepsilon})$,
we can rewrite the last inequality as
$$
\frac{1}{n}\sum_{i=1}^n \log\Bigl(f_i(p_i)\Bigr) \geq
\frac{1}{n}\sum_{i=1}^n \log\Bigl(f(p_i)\Bigr) - 4\varepsilon,
$$
for all $n > N_4$.
Finally, recalling that $\varepsilon < \frac{\delta}{4}$, we have, for all $n > N_4$,
$$
\frac{1}{n}\sum_{i=1}^n \log\Bigl(f_i(p_i)\Bigr) -
\frac{1}{n}\sum_{i=1}^n \log\Bigl(f(p_i)\Bigr) \geq - \delta.
$$
This contradicts (\ref{eq:proof_contr1})
and therefore completes the proof of Theorem~\ref{thm:plug-in_result}.
\end{proof}

\section{Empirical results}\label{sec:results}

In this section we investigate the performance of our plug-in martingale and
compare it with that of the simple mixture martingale. Two real-life datasets have
been tested for exchangeability: the USPS dataset and the Statlog Satellite dataset.

%%%% figures %%%%
\begin{figure}[tb]%[t]%[ht]
\vskip 0.2in
\begin{center}
\centerline{\includegraphics[width=0.7\columnwidth]{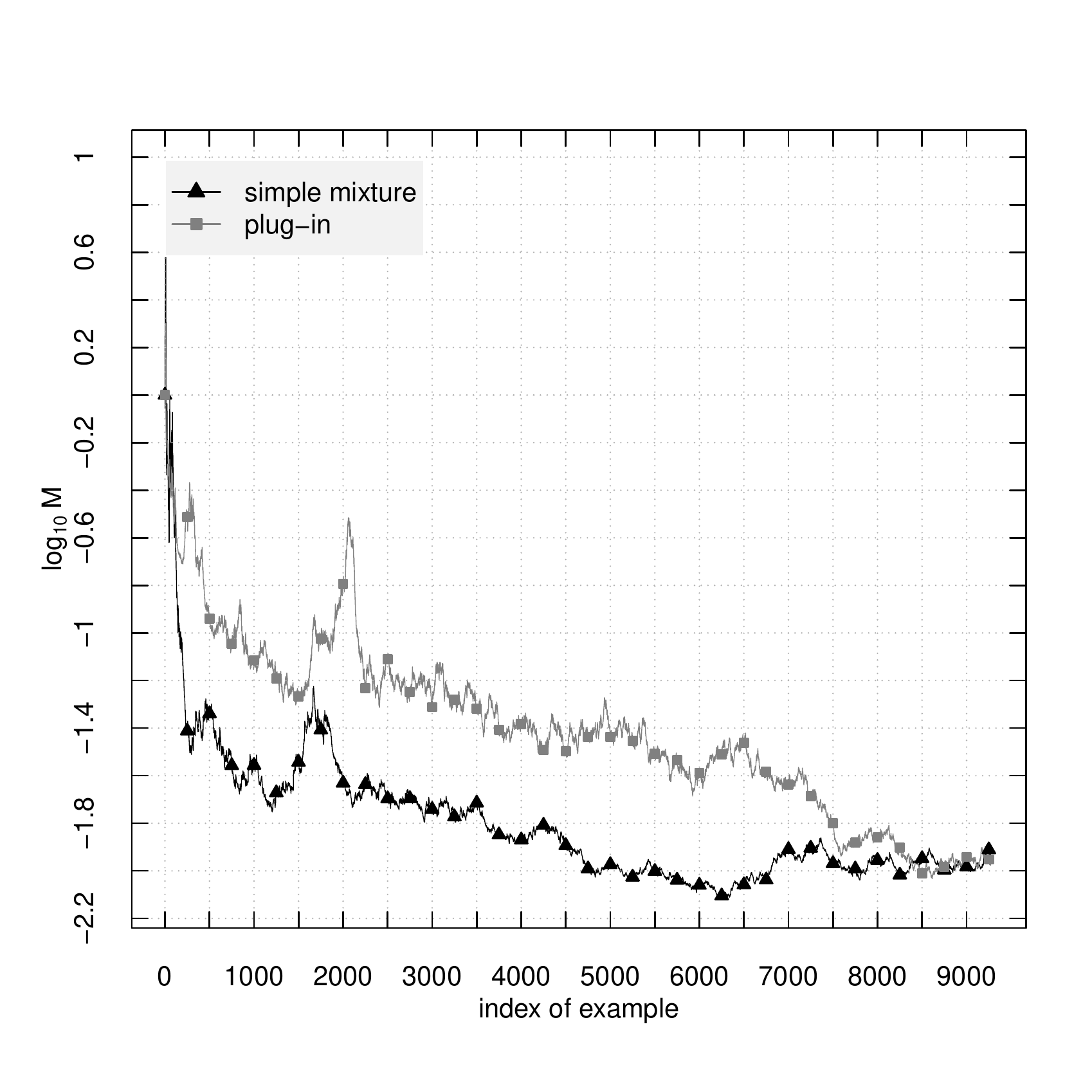}}
\caption{The growth of the martingales for the
USPS dataset randomly shuffled before on-line testing. The exchangeability
assumption is satisfied: the final martingale values are about $0.01$.}
% more precisely: 0.011
\label{fig:usps_mixed_martingales}
\end{center}
\vskip -0.2in
\end{figure}

\begin{figure}[tb]%[t]%[ht]
\vskip 0.2in
\begin{center}
\centerline{\includegraphics[width=0.7\columnwidth]{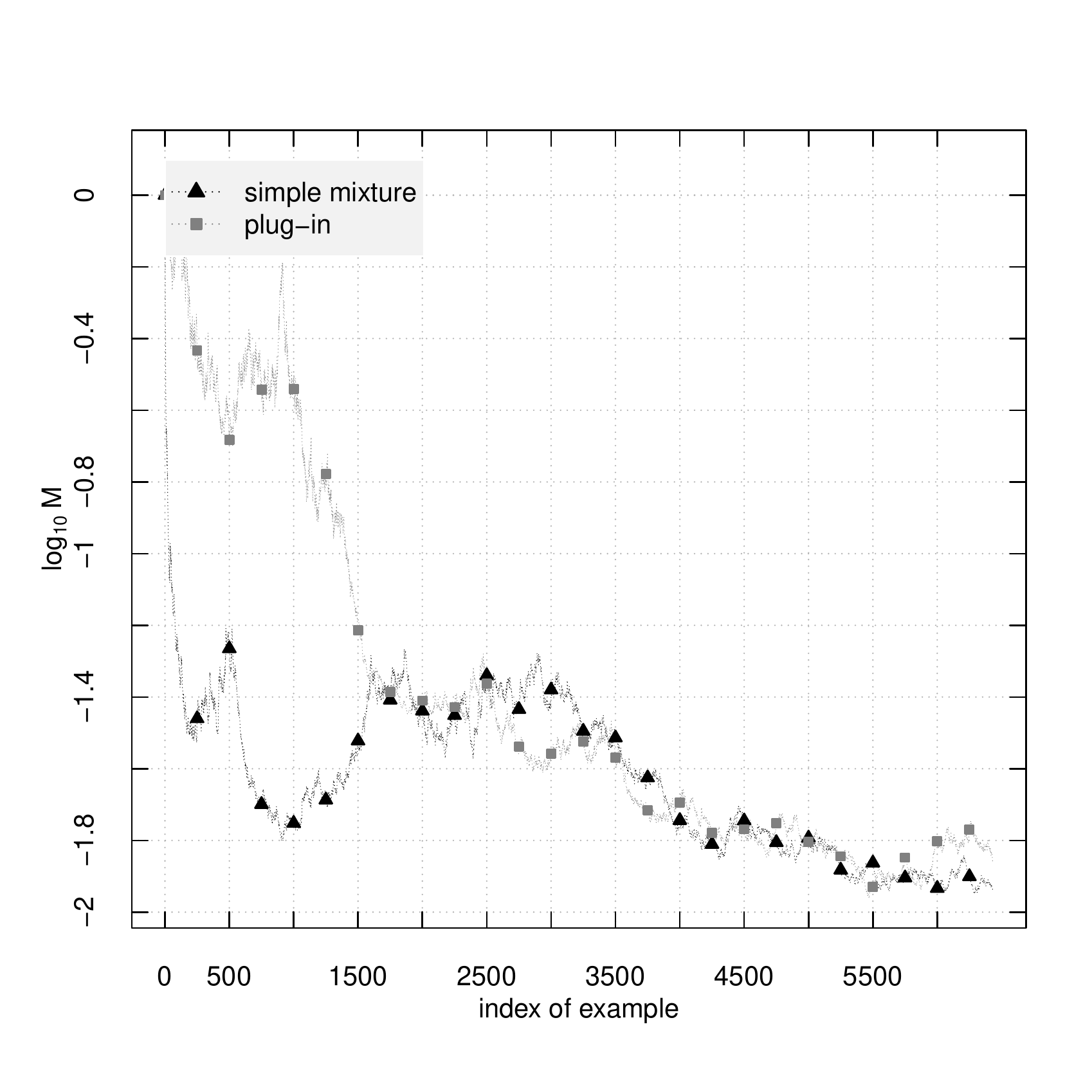}}
\caption{The growth of the martingales for the Statlog Satellite dataset randomly shuffled before on-line testing.
 The exchangeability assumption is satisfied: the final martingale values are about $0.01$.}
\label{fig:statlog_mixed_martingales}
\end{center}
\vskip -0.2in
\end{figure}

\begin{figure}[tb]%[t]%[ht]
\vskip 0.2in
\begin{center}
\centerline{\includegraphics[width=0.7\columnwidth]{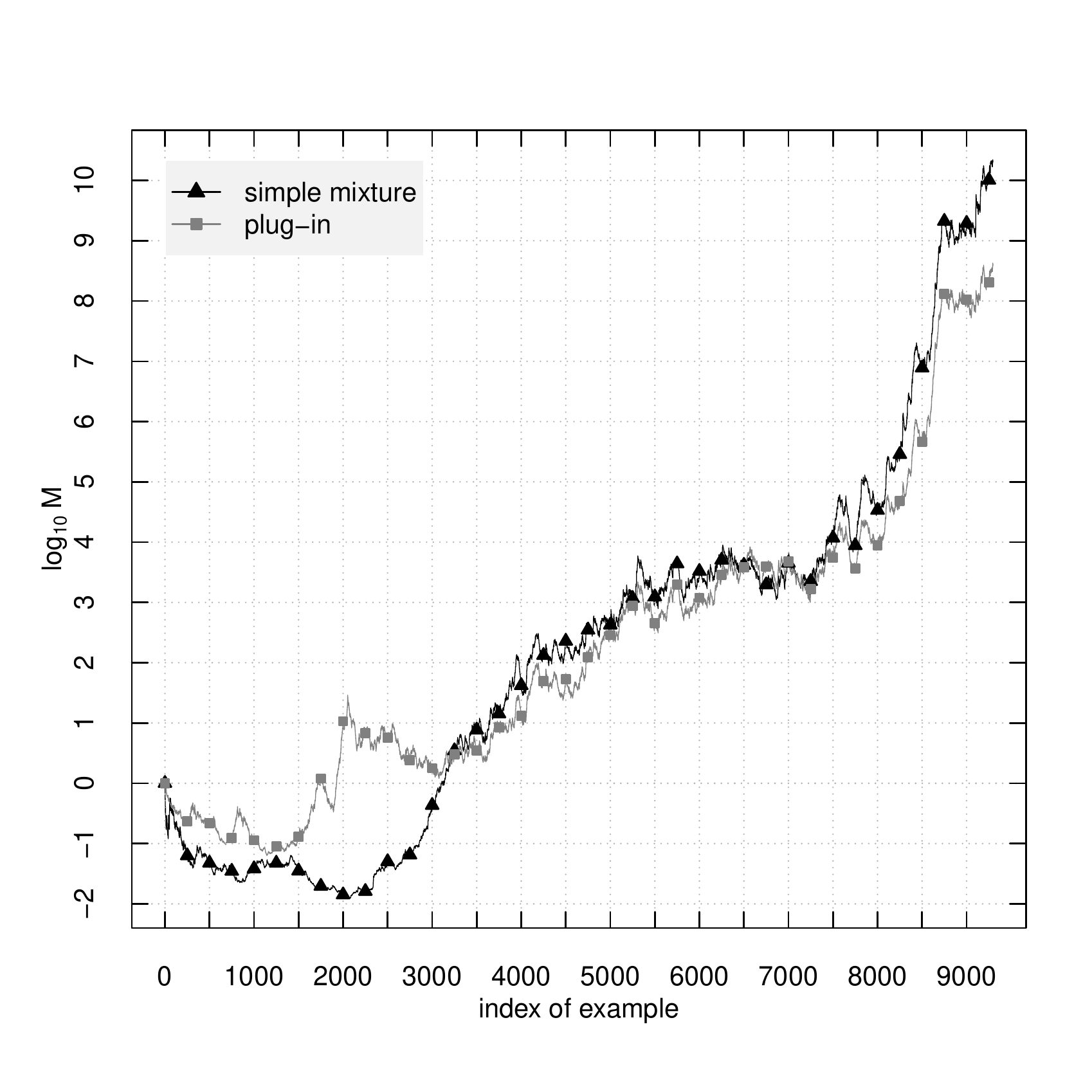}}
\caption{The growth of the martingales for the USPS dataset.
For the examples in the original order the exchangeability assumption is rejected:
the final martingale values are greater than $3.8\times10^{8}$.}
\label{fig:usps_martingales}
\end{center}
\vskip -0.2in
\end{figure}

\begin{figure}[tb]%[t]%[ht]
\vskip 0.2in
\begin{center}
\centerline{\includegraphics[width=0.7\columnwidth]{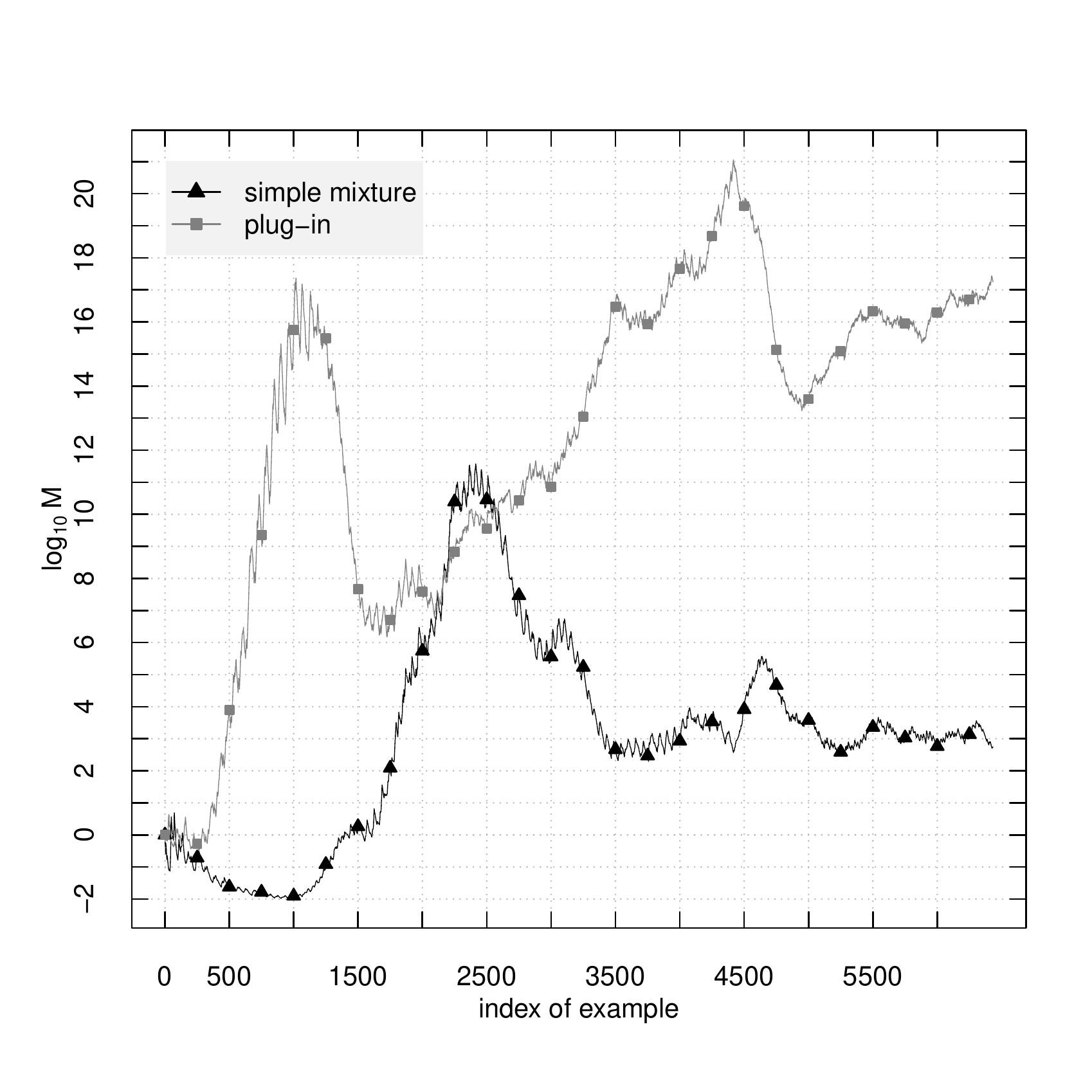}}
\caption{The growth of the martingales
for the Statlog Satellite dataset. For the examples in the original order the exchangeability
assumption is rejected: the final value of the simple mixture martingale is
$5.6\times10^{2}$, and the final value of the plug-in martingale is $1.8\times10^{17}$.}
\label{fig:statlog_martingales}
\end{center}
\vskip -0.2in
\end{figure}

\begin{figure}[tb]%[t]%[ht]
\vskip 0.2in
\begin{center}
\centerline{\includegraphics[width=0.7\columnwidth]{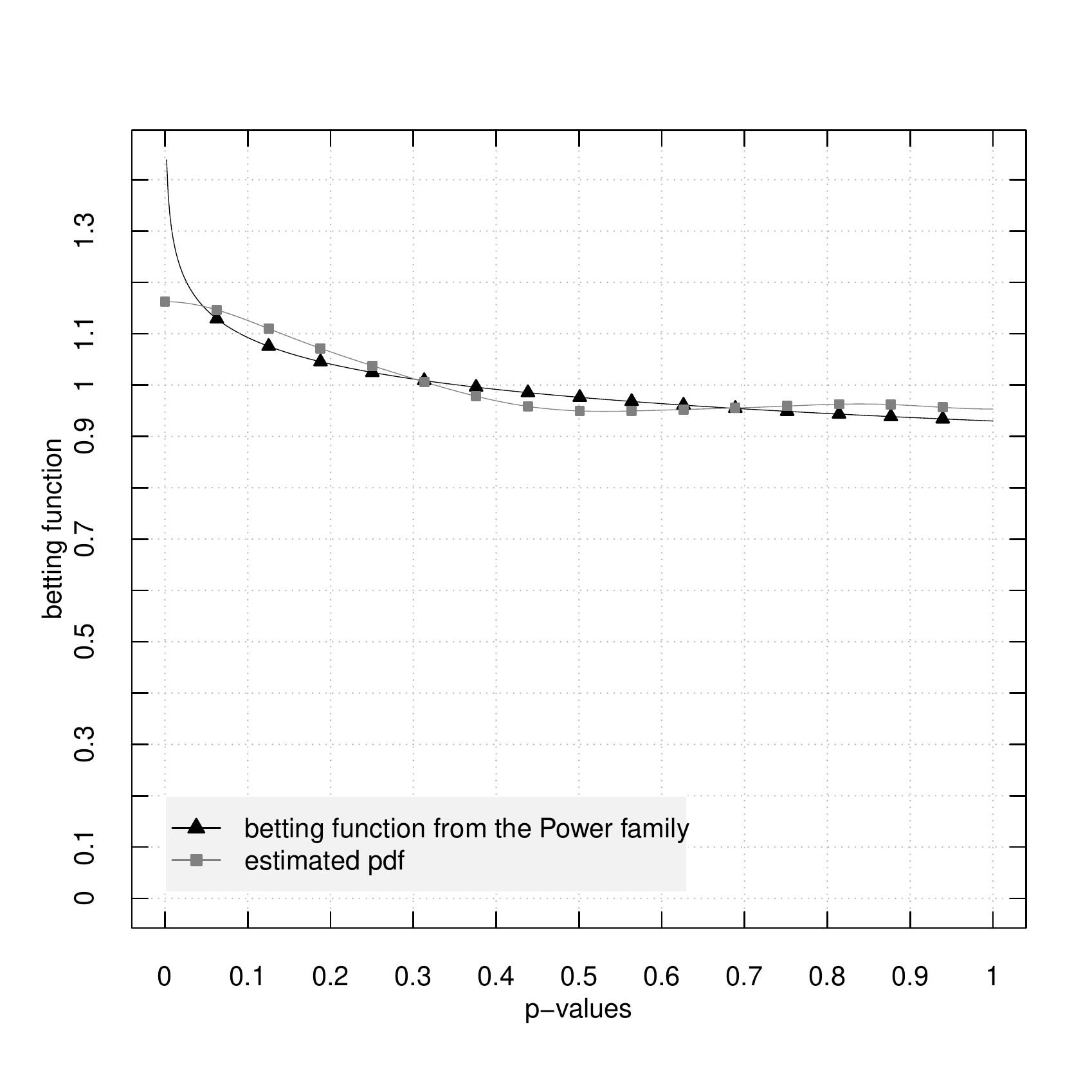}}
\caption{The betting functions for testing the USPS dataset for
examples in the original order.}
\label{fig:usps_betting_f}
\end{center}
\vskip -0.2in
\end{figure}

\begin{figure}[tb]%[b]%[ht]
\vskip 0.2in
\begin{center}
\centerline{\includegraphics[width=0.7\columnwidth]{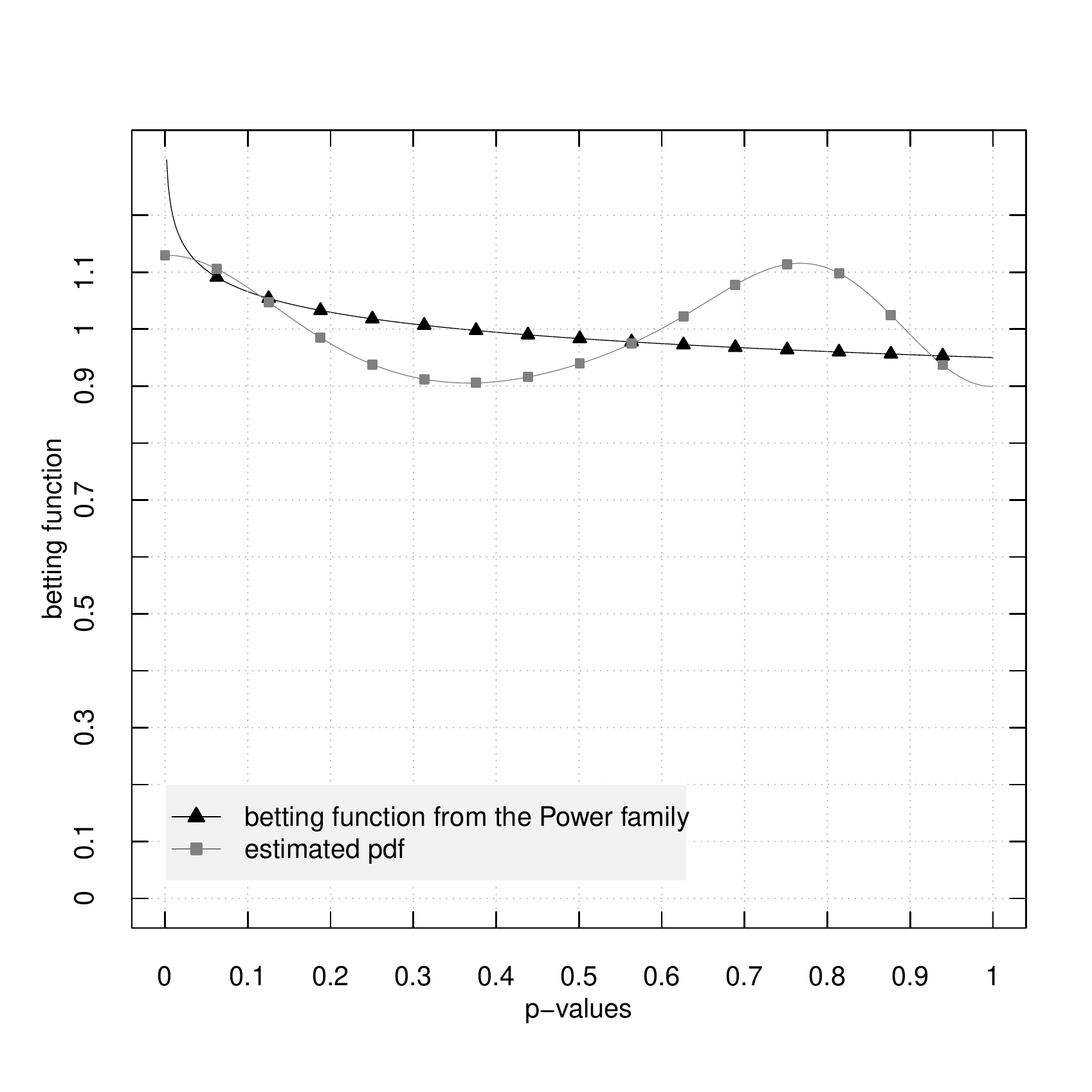}}
\caption{The betting functions for testing the Statlog Satellite dataset for
examples in the original order.}
\label{fig:statlog_betting_f}
\end{center}
\vskip -0.2in
\end{figure}
%%%%%%%%%%%%%%%%%%%
\subsection{USPS dataset}

\paragraph{Data}
The US Postal Service (USPS) dataset consists of $7291$ training examples and $2007$ test examples
of handwritten digits, from $0$ to $9$.
The data were collected from real-life zip codes. Each example
 is described by the $256$ attributes representing the pixels for displaying a digit on the $16\times16$
gray-scaled image and its label.
It is well known that the examples in this dataset are not perfectly exchangeable
\citep{vovk/nouretdinov/gammerman:2003},
and any reasonable test should reject exchangeability there.
In our experiments we merge the training and test sets and perform testing for the full dataset of $9298$
examples.

Figure \ref{fig:usps_mixed_martingales} shows the typical performance of the martingales
when the exchangeability assumption
is satisfied for sure: all examples have been randomly shuffled before the testing.

Figure \ref{fig:usps_martingales} shows the performance of the martingales when the examples arrive in the original order:
first 7291 of the training set and then 2007 of the test set.
The p-values are generated on-line by Algorithm \ref{alg:generate_pVal}
and the two martingales are calculated from the same sequence of p-values.
The final value for the simple mixture martingale is $2.0\times10^{10}$,
and the final value for the plug-in martingale is $3.9\times10^{8}$.

Figure \ref{fig:usps_betting_f} shows the betting functions
that correspond to the plug-in martingale and the ``best'' power martingale.
For the plug-in martingale, the function is the estimated probability density function
calculated using the whole sequence of p-values.
The betting function for the family of power martingale corresponds to the parameter $\varepsilon^\ast$
that provides the largest final value among all power martingales.
It gives a clue why we could not see advantages of the new approach for this dataset:
both martingales grew up to approximately the same level.
There is not much difference between the best betting functions for the old and new methods,
and the new method suffers because of its greater flexibility.
% The larger final value of the simple mixture is accounted for high values of the betting functions on
% the peak of the p-values distribution.
% These are the points where the power martingales win much more than the plug-in martingale.

\subsection{Statlog Satellite dataset}
%\label{subsec:StatLog}

\paragraph{Data}
The Statlog Satellite dataset \citep{uci} consists of $6435$ satellite images
(divided into $4435$ training examples and $2000$ test examples).
The examples are $3\times3$ pixel sub-areas of the satellite picture, where each
pixel is described by four spectral values in different spectral bands.
Each example is represented by $36$ attributes and a label indicating the classification of the central pixel.
Labels are numbers from $1$ to $7$, excluding $6$.
The testing results are described below.

Figure \ref{fig:statlog_mixed_martingales} shows the performance of the  martingales
for randomly shuffled examples of the dataset.
As expected, the martingales do not reject the exchangeability assumption there.

Figure \ref{fig:statlog_martingales} presents the performance of the  martingales
when the examples arrive in the original order.
The final value for the simple mixture martingale is $5.6\times10^{2}$ and the final value for the plug-in martingale
is $1.8\times10^{17}$.
Again, the corresponding betting functions for the plug-in martingale and the ``best''
power martingale are presented in Figure~\ref{fig:statlog_betting_f}.
For this dataset the generated p-values have a tricky distribution.
The family of power betting functions $\varepsilon p^{\varepsilon - 1}$ cannot provide
a good approximation.
The power martingales lose on p-values close to the second peak of the p-values distribution.
But the plug-in martingale is more flexible and ends up with a much higher final value.

It can be argued that both methods, old and new, work for the Statlog Satellite dataset
in the sense of rejecting the exchangeability assumption
at any of the commonly used thresholds (such as 20 or 100).
However, the situation would have been different had the dataset consisted
of only the first 1000 examples:
the final value of the simple mixture martingale would have been $0.013$
whereas the final value of the plug-in martingale would have been $3.74 \times 10^{15}$.
% for the plug-in: 3.738155 * 10^15
% for the simple mixture: 0.01255027

\section{Discussion and conclusions}\label{sec:conclusion}

In this paper we have introduced a new way of constructing martingales
for testing exchangeability on-line.
We have shown that for stable sequences of p-values the new more adaptive martingale provides asymptotically
the best result compared with any other martingale with a fixed betting function.
The experiments of testing two real-life datasets have been presented.
Using the same sequence of p-values the plug-in martingale extracts approximately the same amount
or more information about the data-generating distribution
as compared to the previously introduced power martingales.

\begin{remark}
  The previous studies were based on the natural idea that lack of exchangeability
  leads to new examples looking strange as compared to the old ones
  and therefore to small p-values
  (for example, if the data-generating mechanism changes its regime
  and starts producing a different kind of examples).
  This is, however, a situation where lack of exchangeability makes the p-values cluster around 1:
  we observe examples that are ideal shapes of several kinds distorted by random noise,
  and the amount of noise decreases with time.
  Predicting the kind of a new example using the nonconformity measure (\ref{eq:NN})
  will then tend to produce large p-values.
\end{remark}

% For further research of on-line testing we would like to find a martingale
% that does not need any assumptions about generated p-values.
% It is interesting to generalize
% the idea of the plug-in martingale and eliminate the dependence on the distribution by integrating
% the martingale over the space of all betting functions that are positive functions on the unit interval
% integrated to the unity. Our current experiments have been done with the calculation of the
% integral using the Dirichlet measure. The measure allows to integrate over the space of
% piece-wise constant approximations of desired betting functions.
% But it is not clear yet whether
% the integration can improve the martingale performance.

% My attempt to rewrite this:

Our goal has been to find an exchangeability martingale
that does not need any assumptions about the p-values generated by the method of conformal prediction.
Our proposed martingale adapts to the unknown distribution of the p-values
by estimating a good betting function from the past data.
This is an example of the plug-in approach.
It is generally believed that the Bayesian approach is more efficient
than the plug-in approach \citep[see, e.g.,][p.~483]{bernardo/smith:2000}.
In our present context,
the Bayesian approach would involve choosing a prior distribution on the betting functions
and integrating the exchangeability martingales corresponding to these betting functions
over the prior distribution.
It is not clear yet whether this can be done efficiently and, if yes,
whether this can improve the performance of exchangeability martingales.

% Acknowledgements should only appear in the accepted version.
\section*{Acknowledgments}
\small{We are indebted to Royal Holloway, University of London,
for continued support and funding.
This work has also been supported by the EraSysBio+ grant SHIPREC
from the European Union, BBSRC and BMBF
% ``Living with uninvited guests: comparing plant and animal responses to endocytic invasions
% to the Salmonella Host Interactions''
and by the VLA grant
% Animal Health and Veterinary Laboratories Agency (Department for Environment, Food and Rural Affairs
on machine learning algorithms.
% for analysis of large veterinary datasets

We thank all reviewers for their valuable suggestions for improving the paper.

}

\ifCONF
  \bibliographystyle{icml2012}
\fi
\ifnotCONF
  \bibliographystyle{icml2012}
\fi
\bibliography{submission}
\end{document}